\documentclass[11pt]{article}

\usepackage[T1]{fontenc}
\usepackage[utf8]{inputenc}
\usepackage[margin=1in]{geometry}
\usepackage{amsmath}
\usepackage{amssymb}
\usepackage{amsthm}
\usepackage{array}
\usepackage{algorithm}
\usepackage{algpseudocode}
\usepackage{booktabs}
\usepackage{enumitem}
\usepackage{graphicx}
\usepackage{microtype}
\usepackage[colorlinks=true,linkcolor=blue,citecolor=blue,urlcolor=blue]{hyperref}

\hypersetup{
  pdftitle={Exact Regular-Constrained Variable-Order Markov Generation via Sparse Context-State Belief Propagation},
  pdfauthor={Francois Pachet}
}

\newtheorem{definition}{Definition}
\newtheorem{proposition}{Proposition}
\newtheorem{observation}{Observation}

\newcommand{\vocab}{\mathcal{V}}

\newcommand{\constraints}{\mathcal{C}}
\newcommand{\contextset}{\mathcal{T}}
\newcommand{\aug}{\mathcal{G}}

\newcommand{\ctx}{c}
\newcommand{\pot}{\psi}
\newcommand{\vo}{\mathrm{VO}}
\newcommand{\fo}{\mathrm{1}}
\newcommand{\Prvo}{P_{\vo}}
\newcommand{\Prfo}{P_{\fo}}
\newcommand{\allowed}{A}
\newcommand{\suffix}{\operatorname{suffix}}
\newcommand{\canon}{\operatorname{canon}}

\title{Exact Regular-Constrained Variable-Order Markov Generation via Sparse Context-State Belief Propagation}
\author{%
  Fran\c{c}ois Pachet\\
  LIP6, Sorbonne Universit\'e\\
  Ynosound\\
  \texttt{pachet@gmail.com}
}
\date{May 2026}

\begin{document}

\maketitle

\begin{abstract}
Variable-order Markov models generate sequences over a finite alphabet by
conditioning each symbol on the longest available suffix of the generated
history. Regular constraints, by contrast, describe finite-horizon control
requirements by an automaton: fixed positions, forced endings, metrical
patterns, and forbidden copied fragments are all special cases. Existing exact
methods already handle regular constraints with belief propagation for
first-order Markov chains. The contribution here is the variable-order
extension: identifying the state space on which the existing BP-regular
machinery must be run when the generator is a variable-order/backoff model. A
first-order constraint layer can enforce useful support conditions, but it
computes future mass after merging histories that a variable-order generator
deliberately keeps distinct. We formalize this mismatch and give the sparse
construction obtained by replacing the first-order Markov state with the
observed context state, then taking the standard product with the regular
constraint automaton. For a fixed trained context graph and automaton,
inference is linear in the sequence horizon; in general it is polynomial in the
number of reachable product edges. This gives the correct variable-order
distribution conditioned on regular constraints without expanding to all
\(K\)-tuples. The same finite-source interface supports reversible data
augmentation by inverse count lookup, matching materialized transposition
augmentation without storing transformed corpora. We also separate exact BP
inference from generation-time backoff policies, such as singleton avoidance,
whose stochastic semantics must be made explicit if exactness is claimed.
\end{abstract}

\section{Introduction}

Sequence generators often need to satisfy two goals that pull in different
directions. The first goal is local statistical coherence: each generated symbol
should be plausible given the context already produced. This is the strength of
variable-order Markov models. Rather than conditioning on a fixed previous
symbol, the generator searches for the longest suffix of the current history
that has observed continuations and samples from that distribution. This
mechanism can preserve local patterns that a first-order model would blur. In
interactive musical systems, however, variable order is more than a predictive
device. Backing off from a long context to a shorter one is also a way to avoid
being trapped by sparse or deterministic high-order continuations. This matters
because musical style is often specified by a few examples rather than by the
large homogeneous corpora on which transformer-style models thrive. In that
regime, variable-order Markov models remain attractive: they are data efficient,
interpretable, and close to the musical idea of reusing local stylistic material
without necessarily copying it verbatim. The musical intent is often to find a
continuation that is plausible under the learned style without merely replaying
the unique continuation of a memorized fragment. This style--innovation tradeoff
has been used explicitly to evaluate musical sequence models: fixed-order Markov
models tend to increase literal borrowing as the order grows, while
variable-order strategies can back off from over-specific contexts and reduce
copying while preserving local stylistic evidence
\cite{sakellariou2017maxent}.

The second goal is explicit control. One may require the generated sequence to
start with a specified symbol, end in a target set, contain a particular symbol
at a fixed position, avoid a forbidden substring, or stop exactly at a special
end symbol. These requirements can be expressed as regular languages over the
generated alphabet. Positional constraints are the simplest case; plagiarism or
MAXORDER constraints are another important case, because they forbid long
substrings copied from the source material
\cite{papadopoulos2014avoiding}. Such constraints are not local in the
left-to-right generation process: a choice made now may determine whether a
requirement several steps ahead remains reachable.

Small creative corpora create a related practical issue. One may want to keep
the training material small and explicit, while still exploiting reversible
transformations such as pitch transpositions, register shifts, or rhythmic
variants. Materializing every transformed copy multiplies the count table and
context graph before any constraint product is formed. The construction below
treats augmentation as a source-level row computation: counts are accumulated
by inverse lookup through the transformation family, and regular BP sees the
same sparse stochastic source interface.

The Markov constraints line of work addressed this problem for finite-length
Markov chains \cite{pachet2011markov,pachet2011finite}. Pachet and Roy's
formulation is expressive: Markovianity and additional user requirements are
combined inside a constraint satisfaction or optimization framework. Later
belief-propagation constructions give exact sampling for first-order Markov
chains with unary or regular constraints \cite{papadopoulos2015exact}, with
related extensions for meter and other musical controls
\cite{roy2013meter,pesant2004regular}. These results are the starting point of
the present paper. We do not reintroduce regular constraints, nor claim BP for
regular languages as new. The missing piece is the variable-order state
abstraction: the BP-regular construction is exact only for the stochastic source
whose state is used in the dynamic program.

The same point can be stated in weighted-automaton language. If a stochastic
source is represented as a weighted finite-state automaton, regular
conditioning is obtained by composing it with the constraint automaton and
running the usual dynamic program on the product
\cite{mohri2002wfst,allauzen2007openfst}. Probabilistic suffix automata and
variable-length Markov chains already provide finite-state views of
variable-memory predictors \cite{ron1996power,buhlmann1999variable}. In that
language, the construction below is a sparse probabilistic suffix automaton, or
context trie, composed with a regular acceptor. The question in this paper is
therefore not whether weighted-automaton composition is possible in principle,
but which sparse automaton should represent a variable-order/backoff generator
in a controlled musical system, and how generation-time order policies affect
the meaning of exactness.
Related model-checking and constrained-decoding work similarly starts from a
fixed transition system or decoder and applies automata-style restrictions
\cite{baier2008principles,hokamp2017lexically,post2018fast}. Our focus is the
source-identification and sparse-compilation step for count-based
variable-order generators.

Thus the novelty here is not BP for regular or Markov constraints. It is the
variable-order extension: apply the already known BP-regular machinery to the
correct sparse state space for a variable-order/backoff generator. The question
considered here is narrower but important: how can regular control be combined
with variable-order prediction without replacing the generator by a first-order
projection, and without expanding to the dense order-\(K\) state space? A practical precedent
exists in the Continuator
\cite{pachet2003continuator}. A version with positional constraints was
developed in the European MIROR project and documented in the Routledge book on
children's creative music-making with reflexive interactive technology
\cite{rowe2017miror}. That system combined interactive variable-order
generation with positional control, but it used the support-guided mask scheme.
It could therefore enforce feasibility in real time, but did not sample with the
correct variable-order probabilities conditioned on the constraints.

The contribution is therefore a concrete variable-order/backoff instantiation
of this finite-state view, with emphasis on sparse observed contexts, correct
constrained probabilities, and the interaction with generation-time order
policies.

\paragraph{Optimal variable-order policies.}
The word ``optimal'' is overloaded in variable-order modeling. From an
information-theoretic viewpoint, the usual criterion is predictive code length
or log-loss, with a complexity penalty that prevents the maximum-order model
from simply memorizing the corpus. Rissanen's Context algorithm selects a
variable-length context tree by an MDL principle
\cite{rissanen1983universal}. Related variable-length Markov-chain methods use
penalized likelihood, BIC-like pruning, or statistical tests to keep a longer
suffix only when its continuation distribution differs sufficiently from that
of its suffixes \cite{buhlmann1999variable,ron1996power}. Context-tree
weighting avoids committing to a single pruned tree by mixing over many trees
and gives a universal coding interpretation \cite{willems1995context}. In all
these cases the object is not one globally best order \(K\), but a policy or
tree that assigns different suffix lengths to different histories.

This literature answers a model-estimation question: which context tree gives
the best prediction or compression of the source under a chosen statistical
criterion? It generally does not address finite-horizon regular control.
The question here is complementary: once a trained predictor, context tree, or
explicit generation-time order policy has been fixed, how should it be
conditioned on regular constraints? A direct reduction to an order-\(K\)
Markov chain is possible, but the dense state space has size \(|\vocab|^K\),
which is usually the wrong computational object. The corpus does not contain
all \(K\)-tuples; it contains a sparse context tree. The paper therefore asks
whether exact constrained generation can be moved from emitted symbols to
observed contexts.

The contribution is a technical decomposition of the design space:
\begin{enumerate}[leftmargin=*]
  \item a formulation of the ideal constrained distribution induced by a fixed
        variable-order/backoff generator under regular constraints;
  \item a comparison of the main tradeoffs: first-order feasibility masks,
        first-order weighted hybrids, and dense order-\(K\) lifting;
  \item an exact sparse context-state construction that performs BP on the
        product of observed variable-order contexts and a regular constraint
        automaton, with complexity linear in the sequence horizon for a fixed
        trained product graph;
  \item a separation between exact BP inference and generation-time order
        policies, including Continuator-style singleton avoidance; and
  \item a reversible-augmentation row interface, validated on 12-key
        transposition without materializing transformed corpora; and
  \item a reference implementation and evaluation based on exact
        partition-function checks on tiny regular-constrained corpora,
        scalability measurements on a larger musical example, and virtual
        augmentation checks against explicit materialization.
\end{enumerate}

\section{Background}

Let \(\vocab\) be a finite alphabet. A trained variable-order model stores
continuation counts for contexts up to a maximum length \(K\):
\[
  \ctx = (x_{t-k},\ldots,x_{t-1}), \qquad 1 \leq k \leq K,
\]
with observed counts for candidate continuations \(y \in \vocab\). At generation
time, the model searches from order \(K\) down to order \(1\) and uses the
longest context with usable continuations. This is a standard variable-memory
idea \cite{ron1996power}, used in music generation because different musical
regularities live at different local scales \cite{pearce2004improved}.
We write
\[
  N(c,y)
\]
for the number of times symbol \(y\) follows context \(c\) in the training
corpus. Thus \(N(c,\cdot)\) denotes the continuation-count vector associated
with \(c\).
We also write \(\pi_{\mathrm{ord}}\) for an order policy: a rule that decides
whether a candidate context order should be accepted or whether the sampler
should back off to a shorter suffix. The simplest policy always accepts the
first context with non-empty continuations. A Continuator-style singleton policy
may reject a high-order context when it has exactly one continuation, because
such a deterministic continuation is likely to reproduce a training fragment.
In that case the sampler tries the next shorter suffix instead.

\subsection{Vanilla variable-order backoff}

Following the constraint-programming tradition of presenting propagation
procedures explicitly, as in Mackworth's arc-consistency algorithms
\cite{mackworth1977consistency}, we write the relevant generators as algorithms
before comparing their state spaces. The usual left-to-right variable-order
sampler is the local backoff step in Algorithm~\ref{alg:vanilla-vo}.

\begin{algorithm}[t]
\caption{Vanilla variable-order backoff step}
\label{alg:vanilla-vo}
\begin{algorithmic}[1]
\Require history \(h_t=(x_0,\ldots,x_{t-1})\), counts \(N\), maximum order
  \(K\), order policy \(\pi_{\mathrm{ord}}\)
\For{\(k=K,K-1,\ldots,1\)}
  \State \(c_k \gets \suffix_k(h_t)\)
  \If{\(\sum_y N(c_k,y)=0\)}
    \State \textbf{continue}
  \EndIf
  \State \(D_k \gets \{y : N(c_k,y)>0\}\)
  \If{\(\pi_{\mathrm{ord}}\) rejects \((k,c_k,D_k)\)}
    \State \textbf{continue}
  \EndIf
  \State sample \(y\) from
    \(P(y\mid c_k)=N(c_k,y)/\sum_{y'}N(c_k,y')\)
  \State \Return \(y,k,c_k\)
\EndFor
\State \Return failure
\end{algorithmic}
\end{algorithm}

The appeal of this algorithm is precisely its backoff behavior. Long contexts
preserve local stylistic specificity; shorter contexts provide escape routes
when a long context is absent, too sparse, or too close to literal copying.

For constrained generation, a common temptation is to hack this local procedure
directly: remove candidates forbidden at the current position, force an end
symbol at a target length, reject candidates that appear to lead to dead ends,
or use a first-order BP layer as a feasibility mask. These interventions can be
useful engineering devices, but they do not by themselves define the
conditioned variable-order distribution. They modify a sequential sampler whose
decisions are local, whereas finite-horizon regular constraints require
knowledge of future probability mass.

For constrained generation it is therefore useful to expose the finite object
hidden by the vanilla procedure. The trained model induces an observed context
graph whose nodes are represented suffixes and whose labeled edges are possible
emissions. A variable-order/backoff policy then supplies the edge weights on
this graph. Raw maximum-likelihood continuation counts are one possible
weighting scheme; singleton avoidance or other backoff rules define different
stochastic processes and must be represented accordingly.

By contrast, the classical finite-length constrained Markov setting considers a
sequence
\[
  x_0,\ldots,x_{n-1}
\]
and unary constraints
\[
  x_i \in \allowed_i,
\]
where each \(\allowed_i \subseteq \vocab\) may be a singleton, a set of allowed
values, or the full vocabulary. For a first-order chain, the unconstrained
probability is
\[
  \Prfo(x_0,\ldots,x_{n-1})
  =
  \pi(x_0)\prod_{t=0}^{n-2}\Prfo(x_{t+1}\mid x_t).
\]
The constrained distribution is
\[
  \Prfo(x\mid\constraints)
  =
  \frac{1}{Z_{\fo}(\constraints)}
  \Prfo(x)
  \prod_{t=0}^{n-1}\pot_t(x_t),
  \label{eq:first-order-conditioned}
\]
where \(\pot_t(x_t)=0\) for forbidden values and \(1\) for allowed values. The
normalizing constant \(Z_{\fo}(\constraints)\), exact marginals, and exact
samples can be computed by the forward-backward algorithm.

Regular constraints replace the independent unary masks by a finite automaton
\[
  A=(Q,\vocab,\delta,q_0,F).
\]
A sequence is valid when the automaton state obtained by reading it from
\(q_0\) belongs to the accepting set \(F\). Positional constraints are recovered
by an automaton whose state is the current position. A MAXORDER or
anti-plagiarism constraint is recovered by an automaton that tracks the longest
suffix of the generated sequence that is also a prefix of a forbidden training
fragment, and rejects when a forbidden fragment is completed
\cite{papadopoulos2014avoiding}.

\section{Problem Formulation}

A variable-order model defines a conditional distribution
\[
  \Prvo(x_t \mid x_0,\ldots,x_{t-1}),
\]
by selecting a suffix of the history and sampling from the associated
continuation counts. The resulting sequence probability is
\[
  \Prvo(x_0,\ldots,x_{n-1})
  =
  \prod_{t=0}^{n-1}\Prvo(x_t \mid x_0,\ldots,x_{t-1}).
  \label{eq:vo-joint}
\]
For positional constraints, the ideal constrained target is
\[
  \Prvo(x\mid\constraints)
  =
  \frac{1}{Z_{\vo}(\constraints)}
  \Prvo(x)
  \prod_{t=0}^{n-1}\pot_t(x_t).
  \label{eq:vo-conditioned}
\]
For a regular constraint automaton \(A\), the corresponding target is
\[
  \Prvo(x\mid x\in L(A))
  =
  \frac{1}{Z_{\vo}(A)}
  \Prvo(x)\mathbf{1}[x\in L(A)].
  \label{eq:vo-regular-conditioned}
\]

The difficulty is that Equation~\eqref{eq:vo-conditioned} is not represented by
a first-order chain over emitted symbols; Equation~\eqref{eq:vo-regular-conditioned}
has the same issue. The conditional probability of \(x_t\) depends on a suffix
of the whole generated past. Thus a chain solver whose state is only \(x_t\)
can be exact for Equation~\eqref{eq:first-order-conditioned}, but not for the
variable-order targets.

\begin{observation}
Exact constrained sampling for a variable-order model requires the inference
state to contain enough information to determine both the next variable-order
conditional distribution and the current state of the constraint automaton.
\end{observation}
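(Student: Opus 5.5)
The plan is to make the observation precise as a statement about factorizations of the target \eqref{eq:vo-regular-conditioned}, and then to prove both a sufficiency direction and a necessity direction. Call a function \(s_t=\sigma(x_0,\ldots,x_{t-1})\) of the generated prefix an \emph{inference state} if it admits a recursive update \(s_{t+1}=\Delta(s_t,x_t)\). An exact forward--backward scheme over such states needs the unconstrained mass of every continuation and the constraint indicator of every completion to depend on the prefix only through \(s_t\). The state must therefore determine the pair \((\Prvo(\cdot\mid x_{<t}),\,\delta^*(q_0,x_{<t}))\), up to the equivalence ``induces the same future behaviour.''

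\textbf{Sufficiency.} Take \(s_t=(c_t,q_t)\). Here \(c_t\) is the suffix selected by Algorithm~\ref{alg:vanilla-vo} under \(\pi_{\mathrm{ord}}\), or more generally the longest represented suffix of length at most \(K\), from which the selection is made. The second component is \(q_t=\delta^*(q_0,x_{<t})\). Using \eqref{eq:vo-joint}, the conditional mass of all completions from time \(t\),
\[
  \beta_t(x_{<t})=\sum_{x_t,\ldots,x_{n-1}}\prod_{u\ge t}\Prvo(x_u\mid x_{<u})\,\mathbf{1}[\delta^*(q_t,x_{t:n})\in F],
\]
depends on \(x_{<t}\) only through \(s_t\). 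I will show this by backward induction on \(t\). The base case \(t=n\) is \(\mathbf{1}[q_n\in F]\). In the inductive step, \(\Prvo(x_t\mid x_{<t})\) is a function of \(c_t\). The updates \(c_{t+1}=\canon(c_t x_t)\) and \(q_{t+1}=\delta(q_t,x_t)\) are functions of \((s_t,x_t)\), because the longest represented suffix of \(x_{\le t}\) of length at most \(K\) is a suffix of \(c_t x_t\). The exact sampler then draws \(x_t\) with probability \(\Prvo(x_t\mid c_t)\,\beta_{t+1}(s_{t+1})/\beta_t(s_t)\), and the product of these ratios telescopes to \eqref{eq:vo-regular-conditioned}.

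\textbf{Necessity.} Suppose a state \(\sigma\) merges two prefixes \(h,h'\) of length \(t\). If they differ in the automaton component, pick a completion \(w\) accepted from one state but not the other, of length \(n-t\). If they differ in the VO conditional, pick a symbol \(y\) with \(\Prvo(y\mid h)\ne\Prvo(y\mid h')\). In either case the exact conditional \(\Prvo(x_t=y\mid x_{<t},x\in L(A))\), or the future mass \(\beta_t\), differs between \(h\) and \(h'\). Any sampler whose step-\(t\) law is a function of \(\sigma(x_{<t})\) alone must then be wrong on at least one of the two prefixes, with positive target probability. This is exactly the failure of first-order BP: it merges histories ending in the same symbol.

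The main obstacle is the necessity direction. A difference in \(\Prvo(\cdot\mid h)\) need not yield a difference in the \emph{conditioned} next-step law, because constraint masking or zero future mass can cancel it. The claim is therefore only true up to ``indistinguishable futures,'' and I would state the observation modulo this Myhill--Nerode-style equivalence of future behaviour. Under that equivalence, necessity reduces to the definition, and the substantive content is the sufficiency direction with the explicit state \((c_t,q_t)\).
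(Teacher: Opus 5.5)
Your sufficiency half is the paper's own argument. The paper gives no separate proof of the Observation (it calls it obvious); its constructive content is the two Propositions, whose proofs amount to your backward induction showing that \(\beta_t\) factors through \((c_t,q_t)\), followed by telescoping of the ratios \(\Prvo(x_t\mid c_t)\beta_{t+1}/\beta_t\). Two small repairs are needed there. First, \(c_t\) must be the longest represented suffix, not the suffix actually selected by \(\pi_{\mathrm{ord}}\): under a rejecting policy such as singleton avoidance the selected suffix is too short to support the update. Second, your update lemma uses that \(d\cdot y\in\contextset\) implies \(d\in\contextset\). This holds for count-built context sets but does not follow from suffix-closure alone, which is why the paper simply assumes the \(\canon\) update.

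Where you genuinely diverge is necessity. Your general merging argument does not go through. As you note, the claim that ``in either case the conditioned law or \(\beta_t\) differs'' is false, and a difference in \(\beta_t\) alone does not make the step-\(t\) law wrong. Your repair, necessity modulo equivalence of conditioned futures, is correct but definitional. You are right that the literal claim fails, for example for non-minimal acceptors, or at a last step where the constraint masks all but one symbol; that precision is what your route buys.

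What the paper's route buys instead is the non-vacuous half: a concrete witness that the specific merge made by first-order BP changes the constrained law. In the integer example, the first-order state merges \((1,2)\) with \((6,2)\) and \((1,3)\) with \((6,3)\). Weighting by first-order future mass then puts about \(0.991\) on \(x_0=3\) instead of \(1/11\), and support alone gives \(10/21, 11/21\) instead of \(10/11, 1/11\). Your sentence ``this is exactly the failure of first-order BP'' needs such an instance, since by your own caveat a merge need not alter the conditioned law.
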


This observation is obvious mathematically, but useful architecturally. It says
that the issue is not the constraint language itself; it is the mismatch
between the memory of the predictor and the memory of the inference state.
First-order BP is exact for the model whose state is the last emitted symbol.
Variable-order BP must instead use the active context, or an equivalent finite
state containing the information needed by the backoff policy, and must pair it
with the finite memory of the regular constraint.

\subsection{Motivating Integer Example}
\label{sec:integer-example}

We now give a fully explicit finite example in which first-order constraint
handling is misleading precisely because the generator is variable-order. Let
the alphabet be
\[
  \vocab=\{0,1,2,3,4,5,6\}.
\]
Consider a variable-order model of maximum order \(K=2\), trained on the
following multiset \(D\) of integer sequences:
\[
\begin{array}{rcl}
10   & \times & [0,1,2,4],\\
10   & \times & [0,1,3,5],\\
1    & \times & [0,1,3,4],\\
1000 & \times & [6,2,5],\\
1000 & \times & [6,3,4].
\end{array}
\]
Assume maximum-likelihood transition estimates from observed continuation
counts and the usual longest-suffix rule. The generation problem is the
following. The fixed prefix is \((0,1)\). We generate two symbols, denoted
\(x_0,x_1\), and impose the positional constraint
\[
  x_1 = 4.
\]

\subsubsection{Feasibility is not enough}

At position \(0\), the active context is \(c=(0,1)\). The observed continuation
counts from this context come from the first three rows of \(D\): symbol \(2\)
appears \(10\) times after \((0,1)\), while symbol \(3\) appears \(10+1=11\)
times. Before applying the positional constraint, the local variable-order
predictor therefore gives
\[
  \Prvo(2\mid c)=\frac{10}{21},
  \qquad
  \Prvo(3\mid c)=\frac{11}{21}.
\]
These are not yet constrained probabilities. The positional constraint \(x_1=4\)
enters through the future mass attached to each possible first symbol. Both
candidates are feasible locally, but their constrained future masses are
determined by different order-2 contexts. After choosing \(2\), the active
context is \((1,2)\), for which the only observed continuation is \(4\). After
choosing \(3\), the active context is \((1,3)\), for which the observed
continuations are \(5\) with count \(10\) and \(4\) with count \(1\). Therefore
\[
  \Prvo(4\mid 1,2)=1,
  \qquad
  \Prvo(4\mid 1,3)=\frac{1}{11}.
\]
The unnormalized exact constrained variable-order scores for the first
generated symbol are therefore
\[
  S(x_0=2)=\frac{10}{21},
  \qquad
  S(x_0=3)=\frac{11}{21}\frac{1}{11}=\frac{1}{21}.
\]
After normalization,
\[
  \Prvo(x_0=2\mid (0,1),\;x_1=4)=\frac{10}{11},
  \qquad
  \Prvo(x_0=3\mid (0,1),\;x_1=4)=\frac{1}{11}.
\]
Both \(2\) and \(3\) are locally feasible after \((0,1)\), but they do not
leave the same amount of valid probability mass for satisfying \(x_1=4\). A
support-only method therefore cannot be distributionally correct: if it merely
keeps both candidates and samples from the local probabilities, it uses
\(10/21\) and \(11/21\) instead of the constrained probabilities \(10/11\) and
\(1/11\).

\subsubsection{Future mass must use the right state}

Now consider what a first-order constraint layer sees. Its state after choosing
\(x_0=y\) is only \(y\), so it merges all histories ending in \(y\). The
first-order projection therefore estimates
\[
  \Prfo(4\mid 2)=\frac{10}{10+1000}=\frac{1}{101},
  \qquad
  \Prfo(4\mid 3)=\frac{1+1000}{10+1+1000}=\frac{1001}{1011}.
\]
Thus the order-1 projection reverses the relevant future preference. A sampler
that weights the variable-order proposal by this first-order future mass would
use
\[
  \widetilde{S}_{\fo}(x_0=2)=\frac{10}{21}\frac{1}{101},
  \qquad
  \widetilde{S}_{\fo}(x_0=3)=\frac{11}{21}\frac{1001}{1011},
\]
which assigns approximately \(0.991\) probability to \(x_0=3\), although the
exact variable-order constrained probability of \(x_0=3\) is \(1/11\).

The example is deliberately small, but it exposes the central variable-order
problem. The first-order state has merged the contexts \((1,2)\) and \((6,2)\),
and also \((1,3)\) and \((6,3)\). Correct positional control must instead keep
the context information needed to evaluate the future mass under the same
variable-order model that generates the sequence.
Thus a weighted first-order hybrid can be worse than support alone: it uses
future mass, but future mass from the wrong state abstraction. The sparse
context graph is designed to address both failures at once. It uses future mass
rather than support alone, and it computes that mass over variable-order context
states rather than over merged first-order states.

An executable integer example using the same kind of finite-symbol setting is
available in the public Continuator repository \cite{continuatorgithub}.

\section{Tradeoffs and Reference Points}

The example above suggests several possible responses. They are useful reference
points, but they are not the main algorithmic object of this paper.

A first response is to keep BP first-order and use it only as a feasibility
mask around the vanilla variable-order sampler. This support-guided hybrid is
fast: run forward-backward on the order-1 projection, reject variable-order
candidates whose first-order future mass is zero, and sample the remaining
candidates from the selected variable-order context. Its limitation is exactly
what the example shows. It distinguishes impossible futures from possible ones,
but not fragile futures from abundant ones; moreover, feasibility is computed
after merging histories with the same last symbol.

A small variant is to weight each variable-order candidate by the first-order
future mass,
\[
  \Prvo(y\mid c_t)\,\pot_{t+1}(y)\,\beta^{(1)}_{t+1}(y),
\]
instead of using the first-order message only as a support test. This repairs
the support-only failure mode, but remains heuristic because the future mass is
still computed in the first-order projection. It is therefore not an exact
posterior for the variable-order model.

At the other end, one can make the Markov state the full order-\(K\) suffix
\[
  s_t=(x_{t-K+1},\ldots,x_t),
\]
and run forward-backward on the dense lifted chain. This is exact, and useful
as a definition or a small oracle, but its state space has size
\(|\vocab|^K\). The construction therefore loses the computational advantage of
a variable-order model, whose learned memory is sparse.

The sparse context graph introduced next occupies the useful point between
these extremes. It keeps the future-mass semantics of exact BP, but uses the
observed variable-order contexts rather than all dense \(K\)-tuples.

\section{Sparse Exact Context-State Inference}

The central construction is to lift the state space only to contexts that
actually occur in the trained model. This is the sparse object that the vanilla
variable-order sampler traverses implicitly.

\begin{definition}[Context-state set]
Let \(\contextset\) be the set of observed contexts stored by the
variable-order model, together with any start and end sentinel contexts needed
by the model. We assume that \(\contextset\) is suffix-closed for the backoff
rule: if a context is present, every suffix that the rule may fall back to is
also present, including the empty context. Equivalently, a learned context
dictionary that is not stored this way is first closed by adding the missing
suffix states. A context state \(s\in\contextset\) is a suffix of an emitted
history of length at most \(K\).
\end{definition}

The transition system is defined by emitting a symbol and canonicalizing the
new history suffix:
\[
  s \xrightarrow{\,y\,} s'
  \qquad
  \text{where}
  \qquad
  s' = \canon(s,y).
\]
Here \(\canon(s,y)\) returns the longest suffix of the concatenation \(s\cdot y\)
that belongs to \(\contextset\). Suffix-closure, together with the empty
context, makes this update deterministic and defined for every emitted symbol
that has positive probability under the policy. The graph itself specifies
support and memory: which context states exist, which emissions are possible,
and how the active context updates after each emission. A stochastic
variable-order/backoff policy then assigns edge weights. For the simple
longest-suffix model, the transition probability is
\[
  P_{\contextset}(s'\mid s,y)\Prvo(y\mid s),
\]
where \(P_{\contextset}\) is deterministic if the backoff rule maps each emitted
symbol to a unique next context. In that common case, the edge weight is simply
\[
  w(s,y,s')=\Prvo(y\mid s).
\]
Other finite-memory policies can be used in the same graph formalism, but the
edge weights must be the probabilities of the policy actually being controlled
by BP.

This includes many smoothed or mixed context distributions. If a smoothing rule
assigns a predictive probability \(P(y\mid s)\) from information contained in
the current context state and the fixed trained model, that probability can be
materialized directly as the outgoing edge weight. Kneser--Ney-style discounts,
interpolated backoff, or context-tree mixtures do not require additional
inference state merely because they are mixtures; the latent component can be
summed out before BP. Extra state is needed only when the smoothing or order
policy has generation-time memory not contained in \(s\), for example counters,
mode switches, or an adaptive model updated by the generated prefix.
For example, an interpolated suffix model can precompute
\[
  P_{\mathrm{int}}(y\mid s)
  =
  \lambda(s)\widehat P(y\mid s)
  + (1-\lambda(s))P_{\mathrm{int}}(y\mid \suffix(s)),
\]
with \(\lambda(s)\), discounts, and lower-order continuation probabilities
estimated from the fixed training corpus. The BP edge is still just
\[
  s\xrightarrow{\,y,\;P_{\mathrm{int}}(y\mid s)\,}\canon(s,y),
\]
so no mixture component has to be remembered during inference. If the escape
weight instead depends on generation-time events, such as how often a mode has
already been used, that variable must be added to the state.
Although smoothed context mixtures fit the same edge-weight formalism, our
musical use case favors explicit backoff policies: the selected order is itself
part of the generation behavior, especially when constraints such as MAXORDER
make high-order continuations infeasible or too close to quotation.

The same interface supports reversible data augmentation without copying the
training set. Let \(\aug\) be a finite set of invertible transformations of the
alphabet, extended symbolwise to contexts. Instead of inserting all transformed
sequences, the row used by the source can be computed by inverse lookup,
\[
  N_\aug(s,y)=\sum_{g\in\aug} N(g^{-1}s,g^{-1}y),
  \qquad
  P_\aug(y\mid s)=
  \frac{N_\aug(s,y)}{\sum_z N_\aug(s,z)}.
\]
For music, \(\aug\) can be the 12 pitch transpositions or another family of
reversible pitch or rhythm transforms. This gives the same source
probabilities as a materialized augmented corpus, while the original count
table remains the only stored training object; transformed rows are generated
lazily, cached, or represented by an original context plus an augmentation tag.
The regular BP recurrence is unchanged and still sums over the same augmented
paths. This does not by itself quotient the constraint product, but it exposes
where transformed product rows repeat. If the constraint language is invariant
under the same transformations, contexts can further be quotiented by orbit
representatives; if constraints mention absolute symbols, such as ending on C,
the emitted labels remain absolute while source probabilities are still
computed by orbit aggregation.

Positional constraints are attached to emitted labels, or equivalently to the
last symbol of the destination state:
\[
  \pot_{t+1}(s \xrightarrow{y} s')=\pot_{t+1}(y).
\]
Forward and backward messages are then computed over context states:
\[
  \alpha_{t+1}(s')
  =
  \sum_{s\in\contextset}
  \alpha_t(s)
  \sum_{y:s\xrightarrow{y}s'}
  \Prvo(y\mid s)\pot_{t+1}(y),
\]
and
\[
  \beta_t(s)
  =
  \sum_{y,s':s\xrightarrow{y}s'}
  \Prvo(y\mid s)\pot_{t+1}(y)\beta_{t+1}(s').
\]
The locally correct constrained sampling score is
\[
  S_{\mathrm{exact}}(y)
  =
  \Prvo(y\mid s_t)\pot_{t+1}(y)\beta_{t+1}(\canon(s_t,y)).
  \label{eq:exact-context-score}
\]
For sampling from a fixed initial context, backward messages are sufficient:
they give the remaining accepting mass after each candidate edge. Forward
messages are useful for marginals, diagnostics, and the brute-force exactness
checks, but they are not needed by the ancestral sampler itself.

\begin{proposition}
Assume that the variable-order prediction rule is represented by a deterministic
context update \(s'=\canon(s,y)\) and a transition probability
\(\Prvo(y\mid s)\). Forward-backward sampling on the sparse
context-state automaton samples exactly from
\(\Prvo(x_0,\ldots,x_{n-1}\mid\constraints)\).
\end{proposition}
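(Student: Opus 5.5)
The approach is to reduce the statement to ordinary forward-backward sampling for a Markov chain with deterministic, labeled transitions, after establishing a bijection between emitted sequences and paths in the context-state automaton. \textbf{First step:} fix the initial context \(s_0\), given by the prefix or the start sentinel. Define \(s_{t+1}=\canon(s_t,x_t)\) recursively. Because the update is deterministic, each sequence \(x=(x_0,\ldots,x_{n-1})\) determines a unique path \(s_0\xrightarrow{x_0}s_1\cdots\xrightarrow{x_{n-1}}s_n\). Conversely, each path is determined by its label sequence. Hence the map \(x\mapsto\) path is a bijection onto label sequences that have positive weight.

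\textbf{Second step:} show that the path weight equals \(\Prvo(x)\). This uses the hypothesis that the prediction rule is represented by the context update. Concretely, I would prove by induction on \(t\) that \(s_t\) is the state from which the backoff policy computes \(\Prvo(\cdot\mid x_{<t})\), i.e.\ \(\Prvo(x_t\mid x_0,\ldots,x_{t-1})=\Prvo(x_t\mid s_t)\). The inductive step needs a lemma: the longest suffix of \(s_t\cdot x_t\) in \(\contextset\) equals the longest suffix of the full history \(x_{\le t}\) in \(\contextset\). This holds because any element of \(\contextset\) that is a suffix of \(x_{\le t}\) has length at most \(K\), and its prefix without the last symbol is, by suffix-closure, a context that is a suffix of \(x_{<t}\). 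That prefix is therefore no longer than \(s_t\), by the inductive hypothesis that \(s_t\) is the longest such suffix.

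This lemma is the step I expect to be the main obstacle. Suffix-closure as defined concerns suffixes, not prefixes, so the argument may need the standard trie property that the contexts stored by the model are closed under dropping the oldest symbol, together with whatever the policy reads. If the lemma fails, I will instead take the hypothesis of the proposition literally: "represented by" means exactly that \(\Prvo(x_t\mid x_{<t})=\Prvo(x_t\mid s_t)\) along the canonical path. The induction then becomes a restatement of that hypothesis. With either route, multiplying over \(t\) and using Equation~\eqref{eq:vo-joint} gives that the path weight times \(\prod_t\pot_{t}\) equals the unnormalized target in Equation~\eqref{eq:vo-conditioned}.

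\textbf{Third step:} verify the backward recursion and the sampler. By induction from \(t=n\) with \(\beta_n\equiv 1\), I will show that \(\beta_t(s)\) is the sum of the constrained path weights of all completions starting at \(s\) at time \(t\), so that \(\beta_0(s_0)=Z_{\vo}(\constraints)\). The ancestral sampler draws \(x_t\) with probability \(S_{\mathrm{exact}}(x_t)/\beta_t(s_t)\) from Equation~\eqref{eq:exact-context-score}, where the normalizer equals \(\beta_t(s_t)\) by the recursion. The product of these ratios telescopes to \(\prod_t\Prvo(x_t\mid s_t)\pot_{t+1}(x_t)/\beta_0(s_0)\), which is exactly \(\Prvo(x\mid\constraints)\); the index convention of \(\pot\) should be matched to the positions here. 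Finally, I would note that the same argument applies verbatim to the product with a regular automaton \(A\). Replacing the state \(s\) by \((s,q)\) and \(\pot\) by the acceptance indicator at time \(n\) keeps the update deterministic.
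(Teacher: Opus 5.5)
Your proof is correct and follows the paper's route: the paper reads ``represented by'' literally, exactly as in your fallback, so the context state carries all of the predictor's memory and the path weight times the potentials is the unnormalized target; it then appeals to standard backward-weighted ancestral sampling, which your bijection, backward-recursion and telescoping steps simply make explicit. The optional lemma is unnecessary, and the repair you propose for it names the wrong property. What it actually needs is that deleting the \emph{newest} symbol of a context in \(\contextset\) stays in \(\contextset\) (true for count tables built at every order up to \(K\)); closure under deleting the oldest symbol is just suffix-closure again, and that does not suffice: with \(\contextset=\{\epsilon,(b),(c),(b,c),(a,b,c)\}\) and history \((a,b)\) followed by \(c\), you get \(\canon((b),c)=(b,c)\), whereas the longest suffix of \((a,b,c)\) in \(\contextset\) is \((a,b,c)\).
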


\begin{proof}
The context state contains exactly the memory used by the predictor. Therefore
the probability of an emitted sequence is the product of the corresponding
context-state edge probabilities. Multiplying each edge by the unary potential
of the emitted symbol gives the unnormalized weight in
Equation~\eqref{eq:vo-conditioned}. Standard forward-backward inference on this
finite weighted automaton computes the correct conditional edge marginals and
therefore yields exact ancestral samples by backward-weighted local sampling.
\end{proof}

The size of this model is governed by the number of observed context nodes and
edges, not by \(|\vocab|^K\). If the training corpus has \(m\) tokens, the number
of inserted contexts is at most \(mK\), and in practice much smaller after
sharing. For a horizon \(n\) and a sparse context-edge set \(E_{\contextset}\),
the unconstrained forward or backward pass over the context graph costs
\[
  O(n\,|E_{\contextset}|).
\]
With regular constraints this bound is replaced by the reachable product-edge
bound in Section~\ref{sec:regular-constraints}. Thus, once the trained context
graph and acceptor are fixed, the dynamic program is linear in the requested
sequence length and in the number of reachable edges of the finite-state object
being used. The expensive part can be precomputed for a fixed source, horizon,
and constraint automaton, after which sampling is interactive.

\section{Backoff over BP Graphs and Singleton Skips}

The sparse construction requires a clean stochastic semantics. A simple
longest-suffix rule has one: for each state \(s\), choose the longest suffix
present in the context dictionary with non-empty continuation support, and use
its normalized continuation counts. The canonical state after emitting \(y\) is
the longest represented suffix of \(s\cdot y\). More generally, any deterministic
or stochastic order rule can be exact if it is part of the probabilistic model
on which BP is run.

This point is central rather than cosmetic. In a Continuator-style model,
backoff is the mechanism that mediates between high-order stylistic specificity
and lower-order freedom. A high-order context can be highly plausible precisely
because it is almost a quotation; a shorter context can be less specific but
less tied to one memorized continuation. Constrained context BP should therefore not
erase the order policy by projecting everything to order \(1\), nor should it
pretend that raw maximum-likelihood high-order counts are the only possible
notion of variable-order plausibility.

This distinction matters for a Continuator-style constrained generator. One
practical design is to prepare one BP graph for each maximum order. Let \(G_k\)
be the sparse context graph obtained by keeping contexts of length at most
\(k\), with its own backward messages \(\beta^{(k)}\) for the requested horizon
and constraints. At generation position \(t\), with current history \(h_t\), the
policy-guided backoff step is Algorithm~\ref{alg:context-bp-backoff}.

\begin{algorithm}[t]
\caption{Policy-guided context-BP backoff step}
\label{alg:context-bp-backoff}
\begin{algorithmic}[1]
\Require history \(h_t\), position \(t\), graphs \(G_1,\ldots,G_K\), backward
  messages \(\beta^{(k)}\), order policy \(\pi_{\mathrm{ord}}\)
\For{\(k=K,K-1,\ldots,1\)}
  \State \(s_k \gets \canon_k(\suffix_k(h_t))\) in \(G_k\)
  \State \(C_k \gets \emptyset\)
  \ForAll{edges \(e=(s_k \xrightarrow{y} s')\) in \(G_k\)}
    \If{\(\pot_{t+1}(y)>0\) and \(\beta^{(k)}_{t+1}(s')>0\)}
      \State \(W_k(e)\gets
        \Prvo^{(k)}(y\mid s_k)\pot_{t+1}(y)\beta^{(k)}_{t+1}(s')\)
      \State \(C_k \gets C_k \cup \{e\}\)
    \EndIf
  \EndFor
  \If{\(C_k\neq\emptyset\) and \(\pi_{\mathrm{ord}}\) accepts \(C_k\)}
    \State sample \(e=(s_k\xrightarrow{y}s')\) from \(C_k\) proportionally to
      \(W_k(e)\)
    \State \Return \(y,k,s'\)
  \EndIf
\EndFor
\State \Return failure
\end{algorithmic}
\end{algorithm}

With a longest-feasible policy, the first non-empty \(C_k\) is accepted.
Backoff occurs only when the trained model or the future constraints make the
longer context unusable.

Algorithm~\ref{alg:context-bp-backoff} therefore defines a sequential stochastic
kernel rather than the posterior of a single fixed order-\(K\) source. At step
\(t\), the policy scans the candidate orders, chooses an accepted order
\(K_t\), and then samples an outgoing edge within that order according to the
normalized weights \(W_{K_t}\). The resulting process has probability
\[
  \prod_{t=0}^{n-1}
  P_{\pi_{\mathrm{ord}}}(k_t\mid h_t,\constraints)
  \,
  P_{\mathrm{BP}}(y_t\mid h_t,k_t,\constraints),
\]
where the second factor is the BP-normalized distribution inside the selected
order. It coincides with exact conditioning of a fixed stochastic source only
when the order-selection rule has itself been included in that source's state
and transition probabilities. This is why the experiments below report a
policy success mass for the order-stack run, and a partition function only for
fixed-source runs.

More explicitly, the policy success mass is the probability, under this
sequential order-selection kernel, that the procedure reaches length \(n\)
without returning failure. For the deterministic longest-feasible policy used in
the Bach experiment, this is \(1\) exactly when every generated prefix has at
least one order with positive BP future mass. It is equal to a standard
partition function only in the special case where the same order-selection
kernel is represented as one fixed finite-state source before conditioning.

This distinction is useful rather than accidental. The context-state
construction gives an exact conditional distribution whenever the finite-memory
source is fixed in advance. The same machinery can also be used inside practical
backoff policies that deliberately change the source during generation, for
example by avoiding singleton continuations or by choosing the highest order
that leaves non-zero constrained future mass. In the latter case the claim is
not ``exact conditioning of the original maximum-order source,'' but exact BP
inside each stated policy decision. This is the semantics used by the
Continuator-style examples.

This is the constrained analogue of the vanilla backoff loop. The difference is
that each order is tested using BP information rather than only local
continuation counts. The score \(W_k\) contains the local variable-order
probability and the constrained future mass, so it avoids the support-only
trap. The backward message \(\beta^{(k)}\) is computed on context states in
\(G_k\), so it avoids the first-order trap of merging histories that should
remain distinct. Thus the mechanism keeps the variable-order machinery while
adding finite-horizon constraint satisfaction and the corresponding sampling
probabilities.

Singleton avoidance is a musically motivated variant of this policy. A
high-order context with a unique feasible continuation is precisely the case
most likely to reproduce a training fragment literally. The original
Continuator intuition is therefore to skip such contexts and try a shorter
suffix, while keeping order \(1\) as the final fallback. In a stochastic version,
the singleton can be accepted with a small order-dependent probability, for
example \(1/(k+1)\) at order \(k\), and otherwise suppressed at intermediate
orders when alternatives exist.

Such a policy is useful, but it changes the effective stochastic process. There
are two clean ways to include it in an exact statement:
\begin{enumerate}[leftmargin=*]
  \item fold it into the definition of \(\Prvo(y\mid s)\), so that the
        transition probabilities are the post-heuristic normalized weights; or
  \item represent the heuristic explicitly with additional state, if its
        behavior depends on information not contained in the context state.
\end{enumerate}
Without one of these steps, the result should be described more conservatively:
BP gives exact feasibility and conditional weights inside each candidate context
graph, and the order policy then removes or accepts candidates before
renormalization. The sampler is then a sequential constrained generation policy,
not exact conditioning of an unmodified fixed-order or maximum-likelihood
variable-order Markov source.

\section{Variable-Order Extension of Regular BP}
\label{sec:regular-constraints}

The product construction itself is standard in automaton-based regular
constraints and in BP sampling for first-order Markov chains with regular
constraints \cite{papadopoulos2015exact,pesant2004regular}. The extension here
is to replace the first-order Markov state by the sparse variable-order context
state. The positional construction above is the special case in which the
constraint state is just the current position. For a general regular
constraint, let
\[
  A=(Q,\vocab,\delta,q_0,F)
\]
be a deterministic finite automaton, where undefined transitions are rejected.
The exact inference state is the product
\[
  (s,q)\in \contextset\times Q.
\]
For every context edge \(e=(s\xrightarrow{y,p}s')\) and every automaton state
\(q\) such that \(\delta(q,y)\) is defined, the product graph contains the edge
\[
  (s,q)\xrightarrow{y,p}(s',\delta(q,y)).
\]
We use deterministic acceptors in the implementation. An \(\epsilon\)-free NFA
can be determinized before this step; running the same recursion directly on a
nondeterministic acceptor would sample accepting runs, and therefore represents
the same sequence distribution only for unambiguous automata or after the usual
determinization/summing construction.
Belief propagation is then ordinary backward dynamic programming on this
product graph. Algorithm~\ref{alg:regular-context-bp} gives the corresponding
construction and sampler.

\begin{algorithm}[t]
\caption{Regular-constrained context-BP sampler}
\label{alg:regular-context-bp}
\begin{algorithmic}[1]
\Require context graph \(G=(\contextset,E_{\contextset})\), automaton
  \(A=(Q,\vocab,\delta,q_0,F)\), horizon \(n\), initial context \(s_0\)
\State build reachable product edges
  \(E_{\otimes}=\{(s,q)\xrightarrow{y,p}(s',q') :
    (s\xrightarrow{y,p}s')\in E_{\contextset},\ q'=\delta(q,y)\}\)
\ForAll{\((s,q)\in\contextset\times Q\)}
  \State \(\beta_n(s,q)\gets \mathbf{1}[q\in F]\)
\EndFor
\For{\(t=n-1,n-2,\ldots,0\)}
  \ForAll{reachable product states \((s,q)\)}
    \State \(\beta_t(s,q)\gets 0\)
    \ForAll{product edges \((s,q)\xrightarrow{y,p}(s',q')\)}
      \State \(\beta_t(s,q)\gets
        \beta_t(s,q)+p\,\beta_{t+1}(s',q')\)
    \EndFor
  \EndFor
\EndFor
\State \(s\gets s_0,\ q\gets q_0,\ x\gets()\)
\For{\(t=0,1,\ldots,n-1\)}
  \ForAll{product edges \((s,q)\xrightarrow{y,p}(s',q')\)}
    \State \(W(y,s',q')\gets p\,\beta_{t+1}(s',q')\)
  \EndFor
  \State sample an edge proportionally to \(W\)
  \State append \(y\) to \(x\); \(s\gets s'\); \(q\gets q'\)
\EndFor
\State \Return \(x\)
\end{algorithmic}
\end{algorithm}

\begin{proposition}
Algorithm~\ref{alg:regular-context-bp} samples exactly from
\(\Prvo(x\mid x\in L(A))\), provided that the context graph edge probabilities
are the probabilities of the variable-order process being constrained.
\end{proposition}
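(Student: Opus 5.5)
The plan is to reduce the statement to the standard backward-message identity on the product graph. Fix the initial context \(s_0\) and automaton state \(q_0\). First we establish a bijection between emitted sequences \(x=(x_0,\ldots,x_{n-1})\) with positive source probability and paths of length \(n\) in the product graph starting from \((s_0,q_0)\). Because \(\canon\) is deterministic and \(A\) is a DFA, each sequence determines at most one such path: \(s_{t+1}=\canon(s_t,x_t)\) and \(q_{t+1}=\delta(q_t,x_t)\). The path exists exactly when every context edge has positive weight and every \(\delta(q_t,x_t)\) is defined. Along this path the product of edge weights is \(\prod_t \Prvo(x_t\mid s_t)\). By the hypothesis that the edge probabilities are those of the constrained process (the context state carries exactly the predictor's memory, as in the previous proposition), this equals \(\Prvo(x)\) from Equation~\eqref{eq:vo-joint}. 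The path ends in \(F\) iff \(x\in L(A)\), with undefined transitions counting as rejection.

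Next we prove by backward induction on \(t\) that
\[
  \beta_t(s,q)=\sum_{y_t,\ldots,y_{n-1}}\prod_{u=t}^{n-1}\Prvo(y_u\mid s_u)\,\mathbf{1}[q_n\in F],
\]
where the sum runs over continuations from \((s,q)\) and \((s_u,q_u)\) is the induced product path. The base case \(t=n\) is the initialization \(\beta_n(s,q)=\mathbf{1}[q\in F]\). The inductive step is exactly the recursion in Algorithm~\ref{alg:regular-context-bp}: distribute the first edge and use determinism, so that distinct edges correspond to distinct \(y\). Restricting to reachable states is harmless, because unreachable states never feed into \(\beta_0(s_0,q_0)\). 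In particular, \(\beta_0(s_0,q_0)=Z_{\vo}(A)\).

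Finally we analyse the sampling loop. At step \(t\), from \((s_t,q_t)\), the algorithm picks the edge labelled \(y\) with probability \(\Prvo(y\mid s_t)\beta_{t+1}(s',q')/\beta_t(s_t,q_t)\); the normaliser is correct by the recursion. Multiplying these conditionals over \(t=0,\ldots,n-1\) telescopes the \(\beta\) factors. What remains is \(\Prvo(x)\,\mathbf{1}[x\in L(A)]/\beta_0(s_0,q_0)\), which is Equation~\eqref{eq:vo-regular-conditioned}. We assume \(Z_{\vo}(A)>0\). Then every sampled prefix keeps positive \(\beta\), so the normalisers never vanish.

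The main obstacle is the first step: identifying product-path weight with \(\Prvo(x)\). This needs the suffix-closure and determinism assumptions on \(\canon\), so that the context state after each emission is exactly the one the backoff rule would use. It also needs the DFA assumption, so that each sequence has exactly one run. For a nondeterministic acceptor, the sum over runs would overcount ambiguous sequences, as the remark before the algorithm notes. I would state this step carefully and treat the remaining steps as routine.
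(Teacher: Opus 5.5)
Your proposal is correct and follows the same argument as the paper. Both rest on three steps: a one-to-one correspondence between accepted sequences and product-graph paths whose weight is \(\Prvo(x)\), the backward recursion read as the remaining accepting mass, and backward-weighted local sampling read as ancestral sampling from the normalized accepted-path distribution; you additionally make explicit the backward induction, the telescoping of the \(\beta\) factors, and the assumption \(Z_{\vo}(A)>0\), all of which the paper leaves implicit.
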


\begin{proof}
The proof is the usual regular-BP proof applied to the context graph rather
than to a first-order Markov chain.
Each path in the product graph corresponds to exactly one emitted sequence and
one run of the automaton on that sequence. The path exists only if all emitted
symbols follow valid automaton transitions, and it ends with non-zero terminal
weight only when the automaton state is accepting. The product edge weights are
the variable-order probabilities, so the path weight is \(\Prvo(x)\). The
backward recursion computes the total remaining accepting weight from each
product state; local sampling with weights \(p\,\beta\) is therefore ancestral
sampling from the normalized accepted-path distribution.
\end{proof}

The added complexity of regular control is exactly the automaton product. If
\(E_{\otimes}\) is the set of reachable product edges and
\(V_{\otimes}\) the set of reachable product states, the backward pass and
sampling precomputation cost
\[
  O(n\,|E_{\otimes}|),
\]
with
\[
  |E_{\otimes}|\leq |Q|\,|E_{\contextset}|
\]
for a deterministic automaton. Memory is \(O(n\,|\contextset|\,|Q|)\) if all
messages are stored densely, and \(O(n\,|V_{\otimes}|)\) for a sparse reachable
table as used here. If only the partition function is required, adjacent layers
are enough. Sampling many sequences from the same source and constraint stores
the backward table once; a single streaming sample can trade memory for
recomputation. Thus positional constraints add little overhead, while MAXORDER
or anti-plagiarism constraints pay for the size of the forbidden-substring
automaton. This is the expected price of remembering both the musical context
and the constraint state.

\section{Implementation Optimizations}

The implementation separates exactness-preserving optimizations that are already
used in the reference code from more speculative accelerations. None of the
reported optimizations uses pruning, beams, bounded MDDs, or approximate
filtering: they only change the finite-state representation or the amount of
work repeated by the implementation.

The regular product is built lazily. The worst-case bound
\(|E_{\otimes}|\leq |Q|\,|E_{\contextset}|\) is useful, but many product states
are unreachable from the current prefix and horizon. The implementation creates
product states \((s,q,t)\) only when they are reached by the backward recursion
or queried during generation. The resulting cost is governed by the reachable
product,
\[
  O(n\,|E_{\otimes}^{\mathrm{reach}}|),
\]
which can be much smaller than the full Cartesian product, especially for
plagiarism automata.

Purely positional constraints are kept as time-indexed masks rather than folded
into the regular DFA. In the Bach experiment, for example, MAXORDER remains the
regular automaton, while the final-C requirement is applied at the last time
step. This avoids multiplying the MAXORDER automaton by a redundant position
automaton. Forbidden-substring and MAXORDER constraints are compiled to a dense
integer Aho--Corasick-style DFA when possible, with transitions stored by symbol.
The generic DFA path remains available for other regular constraints.

Backward values are memoized as
\[
  \beta(t,s,q)
\]
or, when an order cutoff is part of the stochastic policy,
\[
  \beta^{(k)}(t,s,q).
\]
The prepared order-stack backend caches feasible candidate sets keyed by
position, recent history, and acceptor state. Candidate sets store cumulative
weights, so drawing many samples from the same prepared backend does not rebuild
the same local distributions. The public non-trace sampling methods also avoid
allocating per-step trace records; trace-producing methods remain available for
diagnostics.

Some natural rewrites have already been tried and rejected. Replacing graph
edges by simple parallel arrays slowed the Bach MAXORDER benchmark in the tested
form. A denser key representation for the regular backward cache also regressed
BP time. These negative results suggest that future BP speedups should change
the computation model, not merely the Python object layout.

The remaining optimization roadmap is therefore split by use case. For reusable
library use, the next low-risk improvements are compiled or lazy sampling tables,
a specialized fast path for the built-in longest-feasible policy, avoiding
copies of cached candidate sets, and storing the next acceptor state in each
candidate. For BP/backward speed, a specialized MAXORDER backend or an iterative
sparse product DP may help, but both require strict equivalence tests against
the generic DFA path. A shared suffix graph would reduce duplicate context and
edge records across order cutoffs, improving memory and preparation time, but
measurements on the Bach workload show no exact overlap of time-indexed product
message states across orders; it is therefore not expected to speed the current
BP benchmark by itself. Optional NumPy or Numba backends are left as future
accelerators after the pure-Python API stabilizes.

\section{Reference Implementation}

The experiments use the \texttt{vo-regular-bp} implementation
\cite{voregbpgithub}. The public reference implementation is tagged
\texttt{v0.1.0}, ``reusable constrained order-stack backend.'' The library
builds sparse variable-order context graphs, deterministic acceptors for
positional and forbidden-substring constraints, reachable context-acceptor
products, backward messages, partition functions or policy success masses, and
exact samples. The tagged version also exposes reusable prepared backends,
constraint builders, event adapters, and a Continuator-shaped facade so that an
external system can prepare a constrained order stack once and draw many
continuations from it. The same package is used for the tiny brute-force checks
for the Bach scalability experiment, and for the virtual augmentation
validation. No approximate pruning, beam search, bounded MDD, or heuristic
filtering is used in the reported experiments.

For reproducibility, the exactness experiments were run with
\path{scripts/eval_tiny_exactness.py}, using \(20000\) samples and seed \(0\).
The CPU timing tables use the optimized implementation line that led to
\texttt{v0.1.0}; the tagged commit is \texttt{0911caf}. The Bach sweep used
\(K=1,\ldots,6\), \(n=32\), \(M=5\), \(100\) samples per \(K\), seed \(0\), one
warmup run, and five measured repeats. Timings were measured on macOS/Darwin
24.6.0 on an Apple M1 Pro with 10 cores and 32 GB RAM, using Python 3.13.11 in a
single Python process. The Bach CPU run used
\path{scripts/eval_bach_scalability.py} with
\texttt{--source-policy policy\_stack}, orders \(1,\ldots,6\), horizon \(32\),
MAXORDER \(5\), and the same sampling settings.
The virtual augmentation checks use the same Bach corpus with integer pitch
shifts \(-6,\ldots,5\), \(K=4\), horizon \(64\), fixed bar-start anchors every
eight events, and a singleton-avoiding backoff policy.
A separate positional-only implementation comparison used
\path{scripts/eval_bach_contextbp_final_compare.py} on the same machine, with
MAXORDER omitted because the public Continuator constraint problem supports
positional constraints but not forbidden-substring regular constraints.

\section{Evaluation}

We evaluate the two specific claims of the paper. First, the context-state
product defines the same constrained distribution as direct conditioning of the
variable-order source. Second, the sparse product remains tractable beyond toy
examples. The evaluation therefore has two parts: an exactness test on tiny
corpora where the partition function is computed by enumeration, and a
scalability test on a larger musical example where enumeration is infeasible.

The two parts use different but explicit source policies. The tiny exactness
experiments use the longest-suffix maximum-likelihood model from
Section~\ref{sec:integer-example}, because that source can be enumerated and
checked by hand. The main Bach experiment uses a Continuator-style constrained
order-stack policy. This policy is not exact conditioning of one fixed
stochastic source; it is an exact constrained generation policy that uses BP
future mass to choose the highest feasible order at each step. We also use a
fixed-source pure-backoff run as a diagnostic, and a smoothed suffix-mixture
source as a secondary comparison.

\subsection{Tiny exactness experiments}

The exactness experiments use tiny integer corpora, small alphabets, and short
horizons. Because enumeration is possible, we compute the constrained partition
function directly:
\[
  Z_{\mathrm{brute}}(A)=
  \sum_{x\in\vocab^n}\Prvo(x)\mathbf{1}[x\in L(A)].
\]
This gives the exact conditional target
\[
  \Prvo(x\mid x\in L(A))
  =
  \frac{\Prvo(x)\mathbf{1}[x\in L(A)]}
       {Z_{\mathrm{brute}}(A)}.
\]

The context-BP computation gives its own partition function
\[
  Z_{\mathrm{BP}}(A)=\beta_0(s_0,q_0).
\]
The first exactness check is
\[
  Z_{\mathrm{BP}}(A)=Z_{\mathrm{brute}}(A).
\]
The second check is distributional: the edge probabilities induced by
Algorithm~\ref{alg:regular-context-bp} are compared with the enumerated
conditional distribution, and empirical sample frequencies are compared by
total variation distance.

We run two variants. The first is the integer example of
Section~\ref{sec:integer-example}: prefix \((0,1)\), horizon \(2\), and the
future positional constraint that the second generated symbol is \(4\). The
accepted sequences and their probabilities are shown in
Table~\ref{tab:tiny-positional}. The method recovers the exact conditional
probabilities \(10/11\) and \(1/11\).

\begin{table}[t]
\centering
\small
\begin{tabular}{lrrrr}
\toprule
Sequence & \(\Prvo(x)\) & Exact conditional & BP conditional & Sample frequency \\
\midrule
\((2,4)\) & 0.4761904762 & 0.9090909091 & 0.9090909091 & 0.9108 \\
\((3,4)\) & 0.0476190476 & 0.0909090909 & 0.0909090909 & 0.0892 \\
\bottomrule
\end{tabular}
\caption{Tiny positional exactness test. The constraint is on the second
generated symbol, so the first choice must account for future constrained
mass.}
\label{tab:tiny-positional}
\end{table}

The second variant uses a non-positional regular constraint: generated
sequences must avoid the forbidden substring \((2,2)\). This constraint is
recognized by a three-state automaton that records whether the previous symbol
was \(2\) or whether the forbidden substring has occurred. The accepted
distribution has thirteen sequences. The largest exact masses are shown in
Table~\ref{tab:tiny-forbidden}; the full distribution is computed by the
evaluation script.

\begin{table}[t]
\centering
\small
\begin{tabular}{lrrrr}
\toprule
Sequence & \(\Prvo(x)\) & Exact conditional & BP conditional & Sample frequency \\
\midrule
\((2,3,0,1)\) & 0.1282051282 & 0.1872246696 & 0.1872246696 & 0.1912 \\
\((0,3,1,2)\) & 0.0961538462 & 0.1404185022 & 0.1404185022 & 0.14145 \\
\((3,0,1,2)\) & 0.0854700855 & 0.1248164464 & 0.1248164464 & 0.1214 \\
\((0,2,1,3)\) & 0.0769230769 & 0.1123348018 & 0.1123348018 & 0.11315 \\
\((3,2,0,1)\) & 0.0672268908 & 0.0981749528 & 0.0981749528 & 0.09775 \\
\bottomrule
\end{tabular}
\caption{Largest probabilities in the tiny non-positional regular exactness
test, where the forbidden substring \((2,2)\) is disallowed.}
\label{tab:tiny-forbidden}
\end{table}

Table~\ref{tab:tiny-summary} summarizes both experiments. In both cases the
partition function computed by context BP matches brute-force enumeration up to
floating-point precision, BP conditional probabilities match the exact
distribution, and all samples satisfy the constraint.

\begin{table}[t]
\centering
\small
\begin{tabular}{lrr}
\toprule
Metric & Positional \(x_1=4\) & Avoid \((2,2)\) \\
\midrule
\(Z_{\mathrm{brute}}\) & 0.52380952381 & 0.684766214178 \\
\(Z_{\mathrm{BP}}\) & 0.52380952381 & 0.684766214178 \\
\(|Z_{\mathrm{brute}}-Z_{\mathrm{BP}}|\) & 0 & 0 \\
\(\mathrm{TV}(\text{exact},\text{BP})\) & 0 & \(4.60\times 10^{-17}\) \\
\(\mathrm{TV}(\text{exact},\text{empirical})\) & 0.00170909 & 0.00924597 \\
Sample violations & 0 & 0 \\
Context states & 11 & 18 \\
Context edges & 23 & 40 \\
Acceptor states & 3 & 3 \\
Reachable product states & 4 & 19 \\
Time-indexed product states & 4 & 32 \\
Reachable product edges & 4 & 36 \\
\bottomrule
\end{tabular}
\caption{Summary of the tiny exactness experiments.}
\label{tab:tiny-summary}
\end{table}

\subsection{Large scalability experiment}

The second experiment uses a pitch-only encoding of Bach's Prelude in C. The
corpus has \(592\) pitch events and an alphabet of \(25\) MIDI pitches. Since
the Prelude has mostly uniform durations, the pitch projection is sufficient for
testing the state-space behavior of the algorithm.

The sweep uses \(K\in\{1,\ldots,6\}\), horizon \(n=32\), a final pitch-class
constraint requiring the sequence to end on C, and a MAXORDER constraint with
\(M=5\): no generated pitch 5-gram may occur in the training corpus. The
MAXORDER acceptor has \(657\) states in the optimized run; the final pitch-class
constraint is applied as a positional mask rather than multiplied into the
regular product. This keeps the target event unchanged while reducing the
materialized product. For each configuration, \(100\) sequences are sampled.

The main run uses constrained policy backoff. For each maximum order \(K\), the
implementation prepares context graphs for the orders \(1,\ldots,K\) and their
regular BP messages. During generation, candidate orders are considered from
high to low; orders whose regular-constrained future mass is zero are skipped;
the highest feasible order is selected; and the emitted pitch is sampled from
that order proportionally to its local continuation probability times the next
backward message. Thus the reported mass is a policy success mass, not the
partition function of one fixed pre-constraint stochastic source. In this
experiment the success mass is \(1\) for all \(K\): the policy always finds a
continuation satisfying both the final pitch-class and MAXORDER constraints.
No singleton avoidance is used in this run.

\begin{table}[t]
\centering
\scriptsize
\setlength{\tabcolsep}{3pt}
\begin{tabular}{rrrrrrrr}
\toprule
\(K\) & \(|\contextset|\) & \(|E_{\contextset}|\) & \(|Q|\) &
\(|S_{\otimes}^{\mathrm{reach}}|\) & \(|E_{\otimes}^{\mathrm{reach}}|\) &
\(|Q|\,|E_{\contextset}|\) & \(|\vocab|^K\) \\
\midrule
1 & 25 & 117 & 657 & 606 & 74115 & 76869 & 25 \\
2 & 167 & 449 & 657 & 929 & 82395 & 294993 & 625 \\
3 & 524 & 1080 & 657 & 1122 & 83162 & 709560 & 15625 \\
4 & 1180 & 2062 & 657 & 1161 & 83195 & 1354734 & 390625 \\
5 & 2187 & 3437 & 657 & 1175 & 83195 & 2258109 & 9765625 \\
6 & 3587 & 5246 & 657 & 1205 & 83207 & 3446622 & 244140625 \\
\bottomrule
\end{tabular}
\caption{Bach constrained policy-backoff experiment: sparse reachable product
compared with the full product upper bound and dense order-\(K\) state count.}
\label{tab:bach-scalability-size}
\end{table}

\begin{table}[t]
\centering
\scriptsize
\setlength{\tabcolsep}{4pt}
\begin{tabular}{rrrrrr}
\toprule
\(K\) & Success mass & BP time (s) & Sample time/seq. (ms) & Violations &
Avg. copied span \\
\midrule
1 & 1 & 0.0756 & 0.230 & 0 & 4.00 \\
2 & 1 & 0.0908 & 0.310 & 0 & 4.00 \\
3 & 1 & 0.0896 & 0.349 & 0 & 4.00 \\
4 & 1 & 0.0902 & 0.374 & 0 & 4.00 \\
5 & 1 & 0.0902 & 0.368 & 0 & 4.00 \\
6 & 1 & 0.0896 & 0.385 & 0 & 4.00 \\
\bottomrule
\end{tabular}
\caption{Bach constrained policy-backoff experiment: success mass, runtime, and
constraint satisfaction. Times are medians over five measured repeats after one
warmup. The MAXORDER constraint forbids copied 5-grams, so every sampled copied
span is at most four.}
\label{tab:bach-scalability-runtime}
\end{table}

Table~\ref{tab:bach-scalability-size} shows the central scalability result. At
\(K=6\), the reachable product has \(1205\) states and \(83207\) edges. The
full product edge upper bound is \(3446622\), and the dense lifted state space
has \(25^6=244140625\) states. Thus sparse context lifting grows with the
observed contexts and reachable constraint product, while dense lifting grows
with all possible histories. Table~\ref{tab:bach-scalability-runtime} reports
the measured backward-pass and sampling times for this implementation run. The
backward pass remains below \(0.10\) seconds in all configurations, and sampling
is below \(0.4\) ms per generated sequence. Constraint violations are zero in
every configuration.

The order traces show how the regular constraint changes the useful order. With
\(K=6\), each measured repeat samples \(100\times32=3200\) events. Between
\(3106\) and \(3119\) of those events use order \(2\), and the remaining
\(81\) to \(94\) use order \(3\). Higher orders have zero feasible future mass
at the start under MAXORDER-5 and are skipped by the constrained policy.

This is not just an implementation detail. If one instead conditions a fixed
pure longest-observed-suffix source, exact BP finds constrained mass
\(6.955\times10^{-3}\) at \(K=1\), \(3.212\times10^{-9}\) at \(K=2\), and zero
mass for \(K\geq3\) in this Bach setup. The zero-mass cases are exact: the
constraint is not relaxed. This diagnostic explains why the main musical
experiment uses constrained policy backoff rather than naive conditioning of a
fixed high-order longest-suffix source.

\paragraph{Qualitative musical illustration.}
Figure~\ref{fig:bach-anchor-example} shows a compact Bach Prelude-style sample
generated with explicit anchor constraints. The run uses horizon \(n=64\),
maximum source order \(K=4\), the longest-feasible constrained policy, and a
MAXORDER anti-copy constraint forbidding generated 6-grams that occur in the
training corpus. The anchors prescribe the pitch plan
\(C,D,E,F,G,A,B,C\) at generated positions \(0,8,\ldots,56\); in the rendered
excerpt these anchors are shown above the corresponding notes. All anchor
constraints are satisfied and the sample has zero anti-copy violations.

\begin{figure}[t]
\centering
\includegraphics[width=\textwidth]{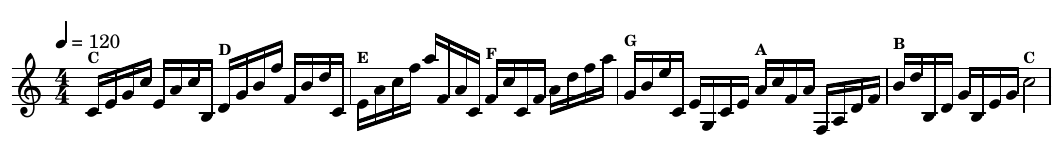}

\vspace{-0.3em}
\scriptsize
\begin{tabular}{lrrrrrrrr}
\toprule
Anchor position & 0 & 8 & 16 & 24 & 32 & 40 & 48 & 56 \\
Anchor pitch & C & D & E & F & G & A & B & C \\
Selected order & 1 & 2 & 2 & 2 & 2 & 2 & 2 & 2 \\
\bottomrule
\end{tabular}
\caption{Qualitative Bach Prelude-style sample with a prescribed cyclic anchor
plan. The selected-order trace is nearly flat: after the initial order-1 event,
all audible generated events use order \(2\). Longer unconstrained stretches, or
less frequent anchors, can give the policy more room to use higher orders.}
\label{fig:bach-anchor-example}
\end{figure}

The example is intended as a musical illustration rather than a separate
quality evaluation. It shows that the same regular-constrained machinery can
combine a planned pitch trajectory with local Prelude-like arpeggiation and an
active anti-copy language. The order trace is therefore reported as a compact
summary instead of a large plot: in this run, the musically audible part is
essentially a stable order-2 constrained continuation.

\paragraph{Virtual transposition augmentation.}
A separate validation checks that reversible augmentation can be handled at the
source-row level. In a Bach MAXORDER + final-C run, virtual 12-key
transposition stores only the original \(592\) events while matching explicit
\(7104\)-event materialized augmentation exactly: the success-mass difference
is \(0\) and the start-order masses agree. The virtual backend exposes the
same \(27371\)-state full graph semantics as materialization, but the lazy BP
graph touches only \(896\) source states in this run. Product-edge work is
unchanged in the generic exact recurrence, so this is a storage and
materialization result rather than a large BP-time speedup.

\begin{table}[t]
\centering
\scriptsize
\setlength{\tabcolsep}{3.2pt}
\begin{tabular}{lrrrrr}
\toprule
Method & Stored events & Full graph states & BP graph states &
Product edges & BP (s) \\
\midrule
Original corpus & 592 & 3587 & 3587 & 82491 & 0.020 \\
Explicit 12-key aug. & 7104 & 27371 & 27371 & 1968532 & 0.468 \\
Virtual 12-key aug. & 592 & 27371 & 896 & 1968532 & 0.497 \\
\bottomrule
\end{tabular}
\caption{Virtual augmentation matches explicit materialized augmentation
(\(0\) success-mass difference and matching start-order masses) while storing
only the original Bach sequence.}
\label{tab:virtual-summary}
\end{table}

Validation tests compare virtual augmentation against explicit materialization
for continuation rows, graph edges, positional order-stack distributions, and
regular/MAXORDER distributions. Table~\ref{tab:virtual-augmentation} reports
an anchored 12-key transposition run. All rows use \(K=4\), horizon \(64\),
integer pitch shifts \(-6,\ldots,5\), and anchors
\(60,62,64,65,67,69,71,72\) at positions \(0,8,\ldots,56\). The MAXORDER
constraint is checked against the full virtual augmented corpus. These numbers
are for the exact virtual source backend, not for a compressed product
recurrence.

\begin{table}[t]
\centering
\scriptsize
\setlength{\tabcolsep}{3.2pt}
\begin{tabular}{rrrrrrr}
\toprule
\(M\) & \(|\contextset|\) & \(|E_{\contextset}|\) &
\(|V_\otimes|\) & \(|E_\otimes|\) & Max order & Max copied span \\
\midrule
8 & 2667 & 5573 & 31790 & 2824222 & 4 & 7 \\
7 & 4226 & 7404 & 22531 & 2610565 & 3 & 6 \\
6 & 4765 & 7886 & 14394 & 2245042 & 3 & 5 \\
5 & 4803 & 7914 & 8130 & 1587777 & 2 & 4 \\
\bottomrule
\end{tabular}
\caption{Virtual 12-key transposition augmentation with anchored Bach
generation. All rows satisfy all anchors, have success mass \(1\), and have no
copied \(M\)-gram violation.}
\label{tab:virtual-augmentation}
\end{table}

Orbit diagnostics confirm that much of the enlarged product is transformed
duplicate structure, but not all of it is reusable under the finite shift set
and absolute anchors used here. Product states collapse by about \(15\times\)
modulo transposition, whereas exact row signatures collapse by about
\(1.5\times\). The implemented row cache is exact but conservative: it reuses
DFA-accepted transition rows while applying positional masks exactly, and
speeds the \(M=5\) run by \(1.37\times\), with smaller gains for \(M\geq6\).
We therefore treat deeper transformation-aware product BP as future work
requiring stronger equivariance conditions or a more symbolic recurrence.

For comparison with the existing Continuator implementation, we also ran a
positional-only benchmark with the same Bach pitch corpus, prefix, horizon, and
final-C constraint, but without MAXORDER. This is the constraint class supported
by the public Continuator constraint problem, so it is a fair implementation
comparison but not a comparison on the full regular-constraint task. The
benchmark uses Continuator's ContextBP graph compiler and backward messages
directly, with messages reused for \(100\) samples, rather than the public
sampling call that rebuilds graphs and messages for each sequence.
Table~\ref{tab:continuator-comparison} shows that \texttt{vo-regular-bp}
computes backward messages faster for all \(K\), while Continuator samples
slightly faster per sequence; end-to-end, \texttt{vo-regular-bp} is faster from
\(K=3\) upward under these timing boundaries.

\begin{table}[t]
\centering
\scriptsize
\setlength{\tabcolsep}{3pt}
\begin{tabular}{rrrrrrr}
\toprule
\(K\) & Cont. BP (ms) & New BP (ms) & Cont. sample (ms) &
New sample (ms) & Cont. total (ms) & New total (ms) \\
\midrule
1 & 0.666 & 0.250 & 0.142 & 0.163 & 15.610 & 17.130 \\
2 & 2.645 & 1.080 & 0.186 & 0.204 & 22.603 & 22.751 \\
3 & 6.528 & 2.892 & 0.233 & 0.246 & 32.109 & 30.048 \\
4 & 13.088 & 5.745 & 0.270 & 0.288 & 44.585 & 39.267 \\
5 & 22.447 & 9.852 & 0.318 & 0.333 & 60.870 & 51.024 \\
6 & 34.761 & 15.506 & 0.356 & 0.373 & 80.378 & 63.963 \\
\bottomrule
\end{tabular}
\caption{Positional-only final-C comparison with the existing Continuator
ContextBP implementation. MAXORDER is omitted here because it is not supported
by that Continuator constraint interface. Total time includes parsing, model and
graph construction, constraint construction, BP, and \(100\) samples.}
\label{tab:continuator-comparison}
\end{table}

\section{Discussion}

The experiments should be read as tests of the sparse context-state
construction, not as a benchmark against approximate first-order hybrids. The
tiny examples validate the probabilistic semantics in the only regime where the
full constrained distribution can be enumerated directly: the BP partition
function and conditional probabilities match brute force, and the samples have
accepted-sequence frequencies close to the exact distribution. These checks
support the central claim that future constrained mass must be computed in the
same variable-order state space as the generator. They do not measure the
empirical quality or speed of support-guided or first-order weighted variants.

The Bach experiment addresses the complementary computational question for a
Continuator-style policy. BP is run on sparse context--automaton products, and
the resulting future masses decide which suffix order remains usable under the
regular constraints. In this example the reachable products are orders of
magnitude smaller than both the full product upper bound and the dense lifted
state space. This does not imply that every regular constraint will lead to a
small product: the automaton for the constraint and the reachable context
structure both matter. It does show that the dense \(|\vocab|^K\) construction
is often the wrong computational baseline for a trained variable-order model or
for an order-stack constrained generation policy.

The empirical scope is intentionally narrow. The first-order support and
weighted variants are analyzed as failure modes and reference points, not
benchmarked as competing samplers, and the exact dense order-\(K\) construction
is used as a state-count baseline rather than implemented at Bach scale. The
positional-only Continuator comparison is an implementation-speed comparison on
a shared constraint class; the MAXORDER experiment is a capability and
scalability test for the new regular-constraint machinery; and the virtual
transposition experiment validates source-level augmentation against explicit
materialization, not a new approximate sampler. We report wall-clock times and
implementation-independent reachable state and edge counts, but not a separate
peak-memory profile. Broader corpora, other regular languages, determinization
stress tests, and memory-profile sweeps are natural next tests, but they are
not needed for the exactness claim.

The exactness claim is therefore deliberately conditional. For a specified
finite-memory stochastic source, it applies to the source whose transition
probabilities are the edge weights used by BP, together with the regular
constraint automaton. For an order-stack policy, it applies to the stated
sequential constrained generation semantics: BP supplies exact future mass
inside each candidate order, and the policy then chooses the highest feasible
order before normalizing over that order's feasible outgoing transitions. If
additional musical policies such as singleton avoidance are added, they must be
folded into the policy semantics or represented explicitly in the state.

The likely positive impact is improved controllability and reproducibility for
finite-state generative systems, especially in creative domains where explicit
constraints and anti-copy controls are desirable. The main risk is misuse of
controlled generation systems to imitate protected styles or produce derivative
material. The MAXORDER constraint is one technical safeguard against literal
copying, but it is not a complete policy or legal solution for style imitation.

\section{Conclusion}

Variable-order prediction and regular constraints solve complementary parts
of the controlled generation problem. Variable-order/backoff models provide
local stylistic specificity while avoiding excessive dependence on deterministic
high-order fragments; regular constraints provide finite-horizon user control,
including positional requirements and anti-plagiarism languages. Existing
first-order BP methods solve the latter problem for first-order sources, but
they do not preserve the memory used by a variable-order generator.

The central construction is to lift the inference state to the active context. Dense
order-\(K\) lifting is mathematically direct but usually too large. Sparse
context-state lifting is the variable-order analogue of the earlier BP-regular
construction: it performs forward-backward inference over the product of
observed variable-order contexts and the regular constraint automaton, while
using the same backoff semantics as the sampler.
The empirical results support the two claims made here. On tiny corpora, the
partition function computed by context BP matches brute-force enumeration, and
the induced conditional probabilities match the exact constrained distribution.
On the Bach Prelude experiment, exact BP with a final pitch-class constraint and
a MAXORDER 5-gram constraint drives a constrained order-stack policy on a
reachable product with \(1205\) states and \(83207\) edges at \(K=6\), instead
of the \(3446622\) full product edge upper bound or the \(25^6\) dense lifted
state space. This is the computational point of the construction: regular
constraint information can be computed exactly on sparse context products
without abandoning the variable-order structure that made backoff attractive in
the first place. The same row interface also handles reversible augmentation
virtually, so small corpora can expose transformed statistics without storing
transformed copies.

\bibliographystyle{plain}
\bibliography{references}

\end{document}